\title{Multi-variable LSTM neural network for autoregressive exogenous model}
\author{
  Tian Guo$^*$  \\
  ETH Zurich, Switzerland\\
  \texttt{\{tian.guo\}@gess.ethz.ch} \\
  \And
  Tao Lin \thanks{Equal contribution.} \\
  EPFL Lausanne, Switzerland \\
  \texttt{\{tao.lin\}@epfl.ch}
}
\newcommand{\cb}[1]{\colorbox{blue!10}{#1}}
\newtheorem{theorem}{Theorem}
\begin{document}

\maketitle

\begin{abstract}
In this paper, we propose multi-variable LSTM capable of accurate forecasting and variable importance interpretation for time series with exogenous variables.
Current attention mechanism in recurrent neural networks mostly focuses on the temporal aspect of data and falls short of characterizing variable importance.
To this end, the multi-variable LSTM equipped with tensorized hidden states is developed to learn hidden states for individual variables, which give rise to our mixture temporal and variable attention. Based on such attention mechanism, we infer and quantify variable importance.    
Extensive experiments using real datasets with Granger-causality test and the synthetic dataset with ground truth demonstrate the prediction performance and interpretability of multi-variable LSTM in comparison to a variety of baselines. 
It exhibits the prospect of multi-variable LSTM as an end-to-end framework for both forecasting and knowledge discovery. 
\end{abstract}

\section{Introduction}

Our daily life is now surrounded by various types of sensors, ranging from smartphones, video cameras, Internet of things, to robots. The observations yield by such devices over time are naturally organized in time series data \cite{qin2017dual, yang2015deep}.
In this paper, we focus on time series with exogenous variables. Specifically, given a target time series as well as an additional set of time series corresponding to exogenous variables, a predictive model using the historical observations of both target and exogenous variables to predict the future values of the target variable is an autoregressive exogenous model, referred to as ARX.
ARX models have been successfully used for modeling the input-output behavior of many complex systems \cite{ dipietro2017analyzing, zemouri2010defining, lin1996learning}.
In addition to forecasting, the interpretability of such models is essential for deployment,
e.g. understanding the different importance of exogenous variables w.r.t. the evolution of the target one \cite{hu2018listening, siggiridou2016granger, zhou2015probabilistic}.



Meanwhile, long short-term memory units (LSTM)~\cite{hochreiter1997long} and the gated recurrent unit (GRU)~\cite{cho2014properties}, a class of recurrent neural networks (RNN), have achieved great success in various applications on sequence and time series data \cite{lipton2015learning, wang2016morphological, guo2016robust, lin2017hybrid, sutskever2014sequence}.

However, current recurrent neural networks fall short of achieving interpretability on the variable level when they are used for ARX models. 
For instance, when fed with the multi-variable historical observations of the target and exogenous variables, LSTM blindly blends the information of all variables into the memory cells and hidden states which are used for prediction. Therefore, it is intractable to distinguish the contribution of individual variables into the prediction by looking into hidden states \cite{zhang2017stock}. 

Recently, attention-based neural networks~\cite{bahdanau2014neural, vinyals2015pointer, chorowski2015attention, choi2016retain, qin2017dual, cinar2017position} have been proposed to enhance the ability of RNN in selectively using long-term memory as well as the interpretability. Current attention mechanism is mostly applied to hidden states across time steps, thereby focusing on capturing temporally important information and failing to uncover the different importance of input variables. 

To this end, 
we aim to develop a LSTM neural network based ARX model to achieve a unified framework of both forecasting and knowledge discovery. In particular, the contribution is fourfold.
First, we propose the multi-variable LSTM, referred to as MV-LSTM, with tensorized hidden states and associated updating scheme, such that each element of the hidden state tensor encodes information for a certain input variable. 
Second, by using the variable-wise hidden states we develop a probabilistic mixture representation of temporal and variable attention. Learning and forecasting of MV-LSTM are built on top of this mixture attention mechanism. 
Third, we propose to interpret and quantify variable importance by the posterior inference of variable attention. 
Lastly, we perform an extensive experimental evaluation of MV-LSTM against statistical, machine learning and neural network baselines to demonstrate the prediction performance and interpretability of MV-LSTM.
The idea of MV-LSTM easily applies to other variants of RNN, e.g., GRU or stacking multiple 
MV-LSTM layers. These will be the future work. 
\section{Related work}
Vanilla recurrent neural networks have been used to study nonlinear ARX problem in 
\citet{zemouri2010defining, diaconescu2008use, dipietro2017analyzing}.
\cite{tank2017interpretable, tank2018neural} proposed to identify causal variables w.r.t. the target one via sparse regularization. 
Our MV-LSTM is intended for providing the accurate prediction as well as interpretability of variable importance via attention mechanism.


Recently, attention mechanism has gained increasing popularity
due to its ability in enabling recurrent neural networks to select parts of hidden states across time steps as well as enhancing the interpretability of networks \cite{bahdanau2014neural, vinyals2015pointer, choi2016retain, vaswani2017attention, lai2017modeling, qin2017dual, cinar2017position, choi2018fine, guo2018interpretable}. 
However, current attention mechanism is normally applied to hidden states across time steps, and for multi-variable input sequence, it fails to characterize variable level importance. Only some very recent studies~\cite{choi2016retain,qin2017dual} attempted to develop attention mechanism capable of handling multi-variable sequence data.
\cite{qin2017dual, choi2016retain} first use neural networks to learn weights on input variables and then feed weighted input data into another neural network \cite{qin2017dual} or use it directly for forecasting \cite{choi2016retain}.
In our MV-LSTM, temporal and variable attention are jointly derived from hidden states for individual variables learned via one end-to-end network.

Another line of related research is about tensorization and selectively updating of hidden states in recurrent neural networks. 
\cite{novikov2015tensorizing, do2017matrix} proposed to represent hidden states as a matrix. 
\cite{he2017wider} developed tensorized LSTM in which hidden states are represented by tensors to enhance the capacity of networks without additional parameters.
\cite{koutnik2014clockwork, neil2016phased, kuchaiev2017factorization} put forward to partition the hidden layer into separated modules as independent feature groups. In MV-LSTM, hidden states are organized in a matrix, each element of which encodes information specific to an input variable. Meanwhile, the hidden states are correlatively updated such that inter-correlation among input variables is still captured.

\section{Multi-Variable LSTM}\label{sec:mv}
Assume we have $N-1$ exogenous time series and a target series $\mathbf{y}$ of length $T$, where $\mathbf{y} = [ y_1, \cdots, y_T ]$ and $\mathbf{y} \in \mathbb{R}^T$.\footnote{Vectors are assumed to be in column form throughout this paper.}
By stacking exogenous time series and target series, we define a multi-variable input sequence as
$\mathbf{X}_{T} = \{ \mathbf{x}_1, \cdots, \mathbf{x}_T  \}$, 
where $ \mathbf{x}_t = [ x_{t}^1, \cdots, x_{t}^{N-1}, y_{t} ]  \in \mathbb{R}^N$ is the multi-variable input at time step $t$ and $x_{t}^n \in \mathbb{R}$ is the observation of $n$-th exogenous time series at time $t$.
Given $\mathbf{X}_T$, we aim to learn a non-linear mapping to predict the next value of the target series, namely $\hat{y}_{T+1} = \mathcal{F}(\mathbf{X}_T)$.
Model $\mathcal{F}(\cdot)$ should be interpretable in the sense that we can understand which exogenous variables are crucial for the prediction.      
\subsection{Network Architecture}
Inspired by \cite{he2017wider, kuchaiev2017factorization}, in MV-LSTM we develop tensorized hidden states and associated update scheme, which are able to ensure that each element of the hidden state tensor encapsulates information exclusively from a certain variable of the input. As a result, it enables to develop a flexible temporal and variable attention mechanism on top of such hidden states. 
\begin{figure}[htbp!]
\centering
\includegraphics[width=0.8\textwidth]{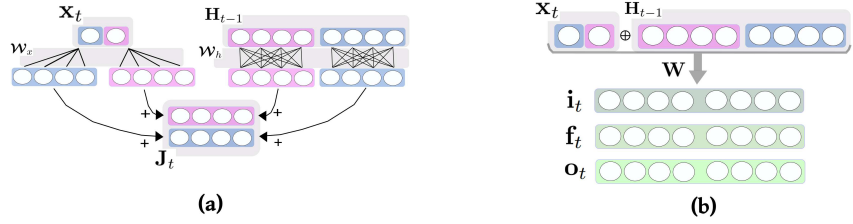}
\caption{A toy example of a MV-LSTM with a two-variable input sequence and the hidden matrix of size $4 \times 2$, i.e. $4$-dimensional hidden state per variable.
Panel (a) exhibits the derivation of cell update matrix $\mathbf{J}_t$, i.e. Eq. \eqref{eq:j}. Purple and blue colors correspond to two variables. Rectangles with circles inside represent input data and hidden states at one step. Grey areas outline the corresponding weights, hidden states, and input. Panel (b) demonstrates the process of gate calculation, i.e. Eq. \eqref{eq:gate}. (best viewed in color)}
\label{fig:graph}
\end{figure}

Specifically, we define the hidden state tensor (matrix) at time step $t$ in a MV-LSTM layer as $\mathbf{H}_t = [ \mathbf{h}_t^{1} \, , \cdots, \, \mathbf{h}_t^{N} ]^\top$, 
where $\mathbf{H}_t \in \mathbb{R}^{N \times d}$, $\mathbf{h}_t^{n} \in \mathbb{R}^{d}$, $N \cdot d = D$ and $D$ is overall size of the layer. 
The element $\mathbf{h}_t^{n}$ of $\mathbf{H}_t$ is a hidden state vector specific to $n$-th input variable.
Then, we define the input-to-hidden transition tensor (matrix) as 
$\boldsymbol{\mathcal{W}}_x = [ \mathbf{W}_x^{1} \, , \cdots, \, \mathbf{W}_x^{N} ]^\top$, 
where $\boldsymbol{\mathcal{W}}_x \in \mathbb{R}^{N \times d }$ and $\mathbf{W}_x^{n} \in \mathbb{R}^{d} $. 
The hidden-to-hidden transition tensor is defined as: 
$\boldsymbol{\mathcal{W}}_h = [ \mathbf{W}_h^{1} \, , \cdots, \, \mathbf{W}_h^{N} ]$, 
where $\boldsymbol{\mathcal{W}}_h \in \mathbb{R}^{N \times d \times d}$ and 
$\mathbf{W}_h^{n} \in \mathbb{R}^{d \times d}$.

Similar to the standard LSTM neural networks \cite{hochreiter1997long}, 
MV-LSTM has the input, forget and output gates as well as the memory cells to control the update of hidden state matrix.
Given the newly incoming input $\mathbf{x}_t$ at time $t$ and the hidden state matrix $\mathbf{H}_{t-1}$ and memory cell $ \mathbf{c}_{t-1}$ up to $t-1$, we formulate the iterative update process in a MV-LSTM layer as follows:
\begin{align}
&\mathbf{J}_t = \tanh \left( 
         \boldsymbol{\mathcal{W}}_h \circledast_N \mathbf{H}_{t-1}
         + \boldsymbol{\mathcal{W}}_x * \mathbf{x}_t
         + \mathbf{b}_j \right)  \label{eq:j} \\
&\begin{bmatrix}
        \mathbf{i}_t \\
        \mathbf{f}_t \\
        \mathbf{o}_t 
        \end{bmatrix} =  \sigma \left( \mathbf{W}  [\mathbf{x}_t \oplus \text{vec}(\mathbf{H}_{t-1})] + \mathbf{b} \right) \label{eq:gate}\\
&\mathbf{c}_t = \mathbf{f}_t \odot \mathbf{c}_{t-1} + \mathbf{i}_t \odot \text{vec}(\mathbf{J}_t) \label{eq:c}\\
&\mathbf{H}_t = \text{matricization}( \mathbf{o}_t \odot \tanh(\mathbf{c}_t) ) \label{eq:h}
\end{align}
Overall, Eq. \eqref{eq:j} gives rise to the cell update matrix $\mathbf{J}_t = [\mathbf{j}_t^1 \, , \cdots, \, \mathbf{j}_t^N]^\top \in \mathbb{R}^{N \times d} $, where $ \mathbf{j}_t^n \in \mathbb{R}^{d}$ corresponds to the update w.r.t. input variable $n$. 
The term $\boldsymbol{\mathcal{W}}_h \circledast_N \mathbf{H}_{t-1} $ and $\boldsymbol{\mathcal{W}}_x * \mathbf{x}_t$ respectively capture the update from the hidden states of the previous step and the new input.
Concretely, the tensor-dot operation $\circledast_{ (\cdot) }$ in Eq. \eqref{eq:j} returns the product of two tensors along a specified axis. 
Thus, given tensor $\boldsymbol{\mathcal{W}}_h$ and $\mathbf{H}_{t-1}$, the tensor-dot of $\boldsymbol{\mathcal{W}}_h$ and $\mathbf{H}_{t-1}$ along the axis $N$ is expressed as $\boldsymbol{\mathcal{W}}_h \circledast_N \mathbf{H}_{t-1} = [ \mathbf{W}_h^1 \mathbf{h}_{t-1}^1 \, , \cdots, \, \mathbf{W}_h^N \mathbf{h}_{t-1}^N ]^\top $, where $ \mathbf{W}_h^n \mathbf{h}_{t-1}^n  \in \mathbb{R}^{d}$.
Additionally, we define $*$ as the product between the transition matrix and input vector:
$ \boldsymbol{\mathcal{W}}_x * \mathbf{x}_t = [ \mathbf{W}_x^1 x_{t}^1 \, , \cdots, \,  \mathbf{W}_x^N x_{t}^N]^\top$.



Eq. \eqref{eq:gate} derives the input gate $\mathbf{i}_t$, forget gate $\mathbf{f}_t$ and output gate $\mathbf{o}_t$ by using $\mathbf{x}_t$ and $\mathbf{H}_{t-1}$. All these gates are vectors of dimension $D$.
$\text{vec}(\cdot)$ refers to the vectorization operation, where in Eq. \eqref{eq:gate} it concatenates columns of $\mathbf{H}_{t-1}$ into a vector of dimension $D$. $\oplus$ is the concatenation operation. $\sigma(\cdot)$ represents the element-wise sigmoid activation function.
Each element in gate vectors is derived based on $\mathbf{x}_t \oplus \text{vec}(\mathbf{H}_{t-1})$ that carries information regarding all input variables, so as to utilize the cross-correlation between input variables.


In Eq. \eqref{eq:c}, memory cell vector $\mathbf{c}_t$ is updated by using the previous cell 
$\mathbf{c}_{t-1}$ and vectorized cell update matrix $\mathbf{J}_t$ obtained in Eq. \eqref{eq:j}. $\odot$ denotes element-wise multiplication.
Finally, in Eq. \eqref{eq:h} the new hidden state matrix at $t$ is the matricization \footnote{ In our case, matricization is the operation that reshape a vector of $\mathbb{R}^D$ into a matrix of $\mathbb{R}^{N \times d}$.}
of $\tanh(\mathbf{c_t})$ weighted by the output gate.

\subsection{Mixture Temporal and Variable Attention}
After feeding a sequence of $\{ \mathbf{x}_1, \cdots, \mathbf{x}_T \}$ into MV-LSTM,
we obtain a sequence of hidden state matrices, denoted by $\boldsymbol{\mathcal{H}}_{T} =[\mathbf{H}_1, \cdots, \mathbf{H}_{T}]$, where $\boldsymbol{\mathcal{H}}_T \in \mathbb{R}^{T \times N \times d}$ and element $\mathbf{H}_t \in \mathbb{R}^{N \times d}$. 
$\boldsymbol{\mathcal{H}}_T$ is then used in our mixture temporal and variable attention mechanism, which facilitates the following learning, inference and interpretation of variable importance.  

\begin{figure}[htbp!]
\centering
\includegraphics[width=0.84\textwidth]{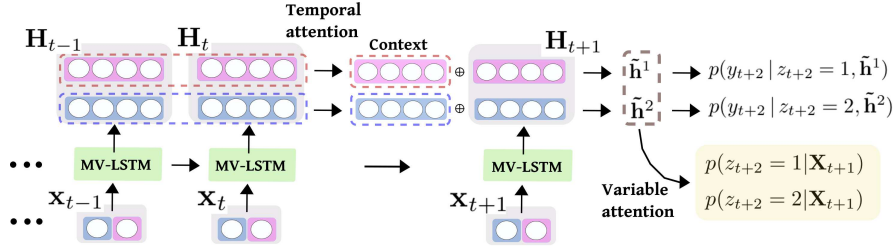}
\caption{Illustration of the mixture temporal and variable attention process in a MV-LSTM layer with a two-variable input sequence and the hidden matrix of size $4 \times 2$, i.e. $4$-dimensional hidden state per variable. (best viewed in color)}
\label{fig:graph}
\end{figure}

Specifically, our attention mechanism is based on a probabilistic mixture of experts model \cite{zong2018deep, graves2013generating, shazeer2017outrageously} over $y_{T+1}$ as:
\begin{align}\label{eq:y}
\begin{split}
&p(y_{T+1} | \mathbf{X}_{T}) = \sum_{n=1}^{N} p(y_{T+1}, z_{T+1}=n | \mathbf{X}_{T})
= \sum_{n=1}^{N} p(y_{T+1} | z_{T+1} = n, \mathbf{X}_{T} ) p(z_{T+1} = n | \mathbf{X}_{T})
\\ &= \underbrace{ \sum_{n=1}^{N-1} p(y_{T+1} | z_{T+1} = n, \mathbf{X}_{T}^n ) p(z_{T+1} = n | \mathbf{X}_{T})}_{\text{Exogenous part}} + \underbrace{p(y_{T+1} | z_{T+1} = N, \mathbf{Y}_{T} )p(z_{T+1} = N | \mathbf{X}_{T})}_{\text{Autoregressive part}} \, ,
\end{split}
\end{align} 
where $\mathbf{X}_T^n = \{ x_1^n, \cdots, x_T^n \}$ and $\mathbf{Y}_T = \{y_1, \cdots, y_T\}$.


In Eq. \eqref{eq:y}, we introduce a latent random variable $z_{T+1}$ into the the density function of $y_{T+1}$ to govern the generation of $y_{T+1}$ conditional on historical data $\mathbf{X}_{T}$.
$z_{T+1}$ is a discrete variable over the set of values $\{ 1, \cdots, N \}$ corresponding to $N$ input variables.
Mathematically, the joint density of $y_{T+1}$ and $z_{T+1}$ is decomposed into  
a component model (i.e. $p(y_{T+1} | z_{T+1} = n, \mathbf{X}_{T}^{n})$) and the prior of $z_{T+1}$ conditioning on $\mathbf{X}_{T}$ (i.e. $ p(z_{T+1} = n | \mathbf{X}_{T}) $).
The component model characterizes the density of $y_{T+1}$ conditioned on historical data of variable $n$, while the prior of $z_{T+1}$ controls to what extent $y_{T+1}$ is generated by variable $n$ as well as enabling to adaptively adjust the contribution of variable $n$ to fit $y_{T+1}$.
For $z_{T+1} = N$, it refers to the autoregressive part. 

Evaluating each part of Eq. \eqref{eq:y} amounts to the temporal and variable attention process using hidden states $\boldsymbol{\mathcal{H}}_T$ in MV-LSTM.
Temporal attention is first applied to the sequence of hidden states for individual variables,
so as to obtain a summarized hidden state for each variable. 
The history of each variable is encoded in such temporally summarized hidden states, which are used to calculate $p(y_{T+1} | z_{T+1} = n, \mathbf{X}_{T}^{n})$ and $p(y_{T+1} | z_{T+1} = N, \mathbf{Y}_{T})$.
Then, since the prior $p(z_{T+1} = n |\mathbf{X}_{T})$ in \eqref{eq:y} is a discrete distribution on $\{1, \cdots, N\}$, it naturally characterizes the attention on the exogenous and autoregressive parts for predicting $y_{T+1}$.

In detail, 
the weights and bias of the temporal attention process are defined as $\boldsymbol{\mathcal{W}}_s = [ \mathbf{W}_s^1, \cdots, \mathbf{W}_s^N ]^\top$ and $\mathbf{b}_s \in \mathbb{R}^{N}$. $\boldsymbol{\mathcal{W}}_s \in \mathbb{R}^{N \times d}$ and the element $\mathbf{W}_s^n \in \mathbb{R}^{d}$ corresponds to $n$-th variable. 
The temporal attention is then derived as:
\begin{align}
& \mathbf{e} = \tanh( \boldsymbol{\mathcal{H}}_{T-1}  \circledast_N \boldsymbol{\mathcal{W}}_s  + \mathbf{b}_s) \label{eq:e}\\
& \mathbf{a} = [\text{softmax}(\mathbf{e}^1), \cdots, \text{softmax}(\mathbf{e}^N)]^{\top} \label{eq:a}\\
& \mathbf{c}_{att} = \mathbf{a} \circledast_N \boldsymbol{\mathcal{H}}_{T-1} \label{eq:cxt}\\
&  \mathbf{\tilde{H}} = \mathbf{H}_{T} \oplus_d \mathbf{c}_{att} \label{eq:atth}
\end{align}
In Eq. \eqref{eq:e}, $\mathbf{e} = [ \mathbf{e}^1, \cdots, \mathbf{e}^N ]^\top \in \mathbb{R}^{N \times (T-1)}$ is derived via the tensor-dot operation, where element $\mathbf{e}^n \in \mathbb{R}^{T-1}$ is the attention score on previous $T-1$ steps of variable $n$ (other methods of deriving attention scores is compatible with MV-LSTM \cite{cinar2017position, qin2017dual} and we use the simple one layer transformation in the present paper).
Then, the attention weights $\mathbf{a} \in \mathbb{R}^{N \times (T-1)}$ is obtained by performing $\text{softmax}(\cdot)$ on each row of $\mathbf{e}$.
$\mathbf{a} \circledast_N \boldsymbol{\mathcal{H}}_{T-1}$ gives rise to the variable-wise context matrix $\mathbf{c}_{att} \in \mathbb{R}^{N \times d}$.
Recall that the hidden state matrix at $T$ is $\mathbf{H}_T \in \mathbb{R}^{N \times d}$.
By concatenating $\mathbf{c}_{att}$ and $\mathbf{H}_T$ along axis $d$ in Eq. \eqref{eq:atth}, we obtain the context enhanced hidden state matrix $\mathbf{\tilde{H}} = [\mathbf{\tilde{h}}^1, \cdots, \mathbf{\tilde{h}}^N ]^\top \in \mathbb{R}^{N \times 2d}$, where $\mathbf{\tilde{h}}^n \in \mathbb{R}^{2d}$ is a hidden state summarizing the temporal information of variable $n$.  

Now we can formulate individual component model in Eq. \eqref{eq:y} as:
\begin{align}
p(y_{T+1} | z_{T+1} = n, \mathbf{X}_{T}^n ) \approx p(y_{T+1} | \mathbf{\tilde{h}}^n ) = \mathcal{N}( y_{T+1} \, | \, \mathbf{W}_o^n \cdot \mathbf{\tilde{h}}^n + b_{o}^n, \sigma^2  )\,,
\end{align}
where we impose normal distribution over $y_{T+1}$, and $\mathbf{W}_o^n$ and $b_{o}^n$ are output weight and bias. In experiments, we simply set $\sigma^2$ to one.   
Meanwhile, by using summarized hidden states $\mathbf{\tilde{H}} $, we derive $p(z_{T+1} = n | \mathbf{X}_{T} )$ to characterize variable level attention as:
\begin{align}p(z_{T+1} = n | \mathbf{X}_{T} ) \approx p(z_{T+1} = n | \mathbf{\tilde{H}} ) = \frac{\exp(\tanh( \mathbf{W}_v^\top \mathbf{\tilde{h}}^n + b_v ))}{\sum_{k=1}^N \exp(\tanh( \mathbf{W}_v^\top \mathbf{\tilde{h}}^k + b_v ))}\, ,\end{align}
where $\mathbf{W}_v \in \mathbb{R}^{2d}$ is the variable attention weight and $b_v \in \mathbb{R}$ is the bias.

\subsection{Learning, Inference and Interpretation}
In the learning phase, denote by $\Theta = \{ \boldsymbol{\mathcal{W}}_h, \boldsymbol{\mathcal{W}}_x, \mathbf{b}, \mathbf{b}_j,  \mathbf{W}, \boldsymbol{\mathcal{W}}_s, \mathbf{W}_v, \mathbf{W}_o, b_o, b_v \}$ the set of parameters in MV-LSTM. Given a set of $M$ training sequences $\{\mathbf{X}_{T}\}_m$ and $\{y_{T+1}\}_m$, the loss function to optimize is defined based on the negative log likelihood of the mixture model plus the regularization term as:
\begin{align}\label{eq:loss}
\mathcal{L}(\Theta) = - \sum_{m=1}^{M} \log \sum_{n=1}^{N} p(z_{\, T+1, m} = n | \mathbf{X}_{T, \, m} ) \mathcal{N}(  y_{T+1 \, , \, m} \, | \, \mathbf{W}_o^n \cdot \mathbf{\tilde{h}}_{m}^n + b_{o}^n \, , \, \sigma^2)  + \lambda \Vert \Theta \Vert^2
\end{align}

In the inference phase, the prediction of $y_{T+1}$ is obtained by the weighted sum of means as \cite{graves2013generating, bishop1994mixture}:
$\hat{y}_{T+1} = \sum_{n=1}^N p(z_{T+1} = n | \mathbf{X}_{T}^n ) ( \mathbf{W}_o^n \cdot \mathbf{\tilde{h}}^n + b_{o}^n )$.

For the interpretation of the variable importance via mixture attention,
we consider to use the posterior of $z_{T+1, m}$, i.e. 
\begin{equation}
p(z_{T+1, m} = n | \mathbf{X}_{T, m}, y_{T+1, m}) \propto p(y_{T+1, m} | z_{T+1, m} = n | \mathbf{X}_{T, m}^n) \cdot p(z_{T+1, m} = n | \mathbf{X}_{T, m} )
\end{equation}
which takes the prediction performance of individual variables into account.
We refer to the derived $p(z_{T+1, m} = n | \mathbf{X}_{T, m}, y_{T+1, m})$ and $p(z_{T+1, m} = n | \mathbf{X}_{T, m} )$ respectively as \textit{posterior} and \textit{prior attention}. 

Meanwhile, note that we obtain the posterior of $z_{T+1, m}$ for each training sequence.  
In order to attain a uniform view of variable importance over the set of data, we define the importance of an input variable by aggregating all the posterior attention of this variable as follows:
\begin{minipage}{.5\linewidth}
\begin{equation*}
\text{Importance}(n) = \frac{\sum_m {p(z_{T+1, m} = n | \mathbf{X}_{T, m} \, , y_{T+1, m})}  }{\sum_k\sum_m {p(z_{T+1, m} = k | \mathbf{X}_{T, m} \, , y_{T+1, m})} },
\end{equation*}
\end{minipage}%
\begin{minipage}{.5\linewidth}
\begin{equation}\label{eq:import}
\sum_{n=1}^{N} \text{Importance}(n) =1
\end{equation}
\end{minipage}


\section{Experiments}
In this part, we report experimental results.
Due to the page limitation, please refer to the appendix section for full results. 
\subsection{Datasets}
 We use three real datasets\footnote{https://archive.ics.uci.edu/ml/datasets.html} and one synthetic dataset to evaluate MV-LSTM and baselines.

\textbf{PM2.5:}
It contains hourly PM2.5 data and the associated meteorological data in Beijing of China. PM2.5 measurement is the target series. The exogenous time series include dew point, temperature, pressure, combined wind direction, cumulated wind speed, hours of snow, and hours of rain. Totally we have $41,700$ multi-variable sequences.

\textbf{Energy:} It collects the appliance energy use in a low energy building. The target series is the energy data logged every 10 minutes. Exogenous time series consist of $14$ variables, e.g. the house inside temperature conditions and outside weather information including temperature, wind speed, humanity and dew point from the nearest weather station.
The number of sequence is $19,700$.

\textbf{Plant:}
This dataset records the time series of energy production of a photovoltaic (PV) power plant in Italy \cite{ceci2017predictive}. Exogenous data consists of $8$ dimensional time series regarding weather conditions (such as temperature, cloud coverage, etc.).
It gives $8,600$ sequences for evaluation. 

\textbf{Synthetic:} It is generated based on the idea of Lorenz model \cite{tank2017interpretable, tank2018neural}.
Exogenous series are generated via the ARMA process with randomized parameters. 
The target series is driven by an ARMA process plus coupled exogenous series of variable $2$ and $3$ with randomized autoregressive orders and thus the synthetic dataset has ground truth of variable importance.  
In total, we generate $40,000$ sequences of 10 exogenous time series.

For each dataset, we perform Augmented Dickey-Fuller (AD-Fuller) and Kwiatkowski Phillips Schmidt Shin (KPSS) tests to determine the necessity of differencing time series \cite{kirchgassner2012introduction}.
The window size, namely $T$ in Sec.~\ref{sec:mv}, is set to 30. 
We further study the prediction performance under different window sizes in the supplementary material.
Each dataset is split into training ($70\%$), validation ($10\%$) and testing sets ($20\%$).
\subsection{Baselines and Evaluation Setup}
The first category of statistics baselines includes: 

\textbf{STRX} is the structural time series model with exogenous variables \cite{scott2014predicting, radinsky2012modeling}. 
It is formulated in terms of unobserved components via the state space method.

\textbf{ARIMAX} augments the classical time series autoregressive integrated moving average model (ARIMA) by adding regression terms on exogenous variables \cite{hyndman2014forecasting}.

The second category of machine learning baselines includes popular tree ensemble methods and regularized regression as:

\textbf{RF} refers to random forests.
It is an ensemble learning method consisting of several decision trees \cite{liaw2002classification, meek2002autoregressive} and has been used in time series prediction \cite{patel2015predicting}.

\textbf{XGT} refers to the extreme gradient boosting \cite{chen2016xgboost}. 
It is the application of boosting methods to regression trees \cite{friedman2001greedy}. 

\textbf{ENET} represents Elastic-Net, which is a regularized regression method combining both L1 and L2 penalties of the lasso and ridge methods \cite{zou2005regularization} and has been used in time series analysis \cite{liu2010learning, bai2008forecasting}.

The third category of neural network baselines includes:

\textbf{RETAIN} requires to pre-train two recurrent neural networks to respectively derive weights on temporal steps and variables, which are then used to perform prediction \cite{choi2016retain}.

\textbf{DUAL} is built upon encoder-decoder architecture \cite{qin2017dual}, which uses an encoder LSTM to learn weights on input variables and then feeds pre-weighted input data into a decoder LSTM for forecasting.

\textbf{cLSTM} proposes to identify Granger causal variables via sparse regularization on the weights of LSTM \cite{tank2017interpretable, tank2018neural}. 

Additionally, we have two variants of MV-LSTM denoted by \textbf{MV-Indep} and \textbf{MV-Fusion}, which are developed to evaluate the efficacy of the updating and mixture mechanism of MV-LSTM.
\textbf{MV-Indep} builds independent recurrent neural networks for each input variable, whose outputs are fed into the mixture attention process to obtain prediction. 
The only difference between \textbf{MV-Fusion} and MV-LSTM is that, 
instead of using mixture attention, MV-Fusion fuses the hidden states of each variable into one hidden state via variable attention.

In ARIMAX, the orders of auto-regression and moving-average terms are set via the autocorrelation and partial autocorrelation.
For RF and XGT, the hyper-parameter tree depth and the number of iterations are chosen from range $[3, 10]$ and $[3, 200]$ via grid search. For XGT, L2 regularization is added by searching within $\{0.0001, 0.001, 0.01, 0.1, 1, 10\}$.
As for ENET, the coefficients for L2 and L1 penalties are selected from $\{0, 0.1, 0.3, 0.5, 0.7, 0.9, 1, 1.5, 2\}$. For these machine learning baselines, multi-variable input sequences are flattened into feature vectors. 

We implemented MV-LSTM and neural network baselines with Tensorflow\footnote{ Code will be released upon requested.}. 
For training, we used Adam with the mini-batch of $64$ instances \cite{kingma2014adam}.
For the size of recurrent and dense layers in the baselines, we conduct grid search over $\{16, 32, 64, 128, 256, 512\}$. 
The size of the MV-LSTM recurrent layer is set by the number of neurons per variable selected from $\{10, 15, 20, 25\}$.
Dropout is set to $0.5$.
Learning rate is searched in $\{0.0005, 0.001, 0.005, 0.01, 0.05 \}$.
L2 regularization is added with the coefficient chosen from $\{0.0001, 0.001, 0.01, 0.1, 1.0\}$.
We train each approach $10$ times and report average performance.

We consider two metrics to measure the prediction performance. Specifically, RMSE is defined as $ \text{RMSE} = \sqrt[]{\sum_{t} (y_t - \hat{y}_t)^2/M} $. MAE is defined as $ \text{MAE} = \sum_{t}|y_t - \hat{y}_t|/M$.
\subsection{Prediction Performance}
We report the prediction errors of all approaches in Table~\ref{tab:rmse} and Table~\ref{tab:mae}. 
In Table~\ref{tab:rmse}, we observe that in most of the time, STRX and ARIMAX underperform machine learning and neural network solutions. Among RF, XGT, and ENET, XGT performs the best mostly. As for neural network baselines, DUAL outperforms RETAIN and cLSTM as well as machine learning baselines in the Synthetic and Energy datasets.
Our MV-LSTM outperforms baselines by around $40\%$ at most.
MV-LSTM performs slightly better than both of MV-Fusion and MV-Indep, while providing the interpretation benefit, which is shown in the next group of experiments.
Above observations also apply to the MAE results in Table~\ref{tab:mae} and we skip the detailed description.
\begin{table}[!h]
  \centering
  \caption{Average test RMSE and std. errors}
  \begin{tabular}{|c|c|c|c|c|}
    \hline
     \backslashbox{Methods}{Dataset} & Synthetic & Energy & Plant & PM2.5  \\
    \hline
    STRX&        $9.23 \pm 0.12$ & $56.87 \pm 0.09$&  $249.89 \pm 0.13$ & $52.51 \pm 0.22$ \\
    ARIMAX&      $9.04 \pm 0.03$ & $50.04 \pm 0.06$&  $223.72 \pm 0.15$ & $42.51 \pm 0.13$ \\
    \hline
    RF &         $6.15 \pm 0.04$ & $49.64 \pm 1.80$&  $165.71 \pm 0.15$ &  $33.84 \pm 1.13$ \\
    XGT &        $6.06 \pm 0.03$ & $41.15 \pm 0.06$& $166.65 \pm 0.09$ & $25.00 \pm 0.02$ \\
    ENET &       $6.05 \pm 0.01$ & $42.78 \pm 0.11$&  $173.26 \pm 0.11$ & $26.03 \pm 0.19$ \\
    \hline
    DUAL &       $6.17 \pm 0.05$ & $40.30 \pm 0.11$ & $175.34 \pm 0.45$  & $25.53 \pm 0.08$ \\
    RETAIN &     $6.18 \pm 0.01$ & $54.77 \pm 0.11$ & $286.64 \pm 0.25$  & $61.22 \pm 0.49$ \\
    cLSTM &      $6.37 \pm 0.01$ & $65.42 \pm 0.23$ & $167.56 \pm 0.34$  & $83.59 \pm 0.05$ \\
    \hline
    MV-Fusion &   $6.02 \pm 0.03$ & $41.26 \pm 0.06$& $162.63 \pm 0.52$ & $25.94 \pm 0.06$ \\
    MV-Indep &    $6.06 \pm 0.05$ & $40.20. \pm 0.07$& $159.90 \pm 0.22$ & $25.15 \pm 0.12$ \\
    \hline
    MV-LSTM &     $\textbf{5.92} \pm 0.03$ & $\textbf{39.81} \pm 0.03$& $\textbf{157.23} \pm 0.16$& $\textbf{24.79} \pm 0.09$ \\
    \hline
\end{tabular}
\label{tab:rmse}
\end{table}
\begin{table}[!h]
  \centering
  \caption{Average test MAE and std. errors}
  \begin{tabular}{|c|c|c|c|c|}
    \hline
     \backslashbox{Methods}{Dataset} & Synthetic & Energy & Plant & PM2.5  \\
    \hline
    STRX&        $7.44 \pm 0.11$ & $68.77 \pm 0.10$ &  $201.03 \pm 0.20$ & $53.93 \pm 0.12$ \\
    ARIMAX&      $7.32 \pm 0.04$ & $50.43 \pm 0.04$ &  $174.09 \pm 0.18$ & $40.05 \pm 0.10$ \\
    \hline
    RF &         $5.23 \pm 0.03$ & $27.30 \pm 0.23$  &  $134.84 \pm 0.18$ & $22.27 \pm 0.03$ \\
    XGT &        $5.15 \pm 0.01$ & $20.20 \pm 0.04$  &  $133.69 \pm 0.10$ & $15.72 \pm 0.04$ \\
    ENET &       $5.17 \pm 0.04$ & $21.68 \pm 0.09$  &  $139.00 \pm 0.09$ & $15.92 \pm 0.02$ \\
    \hline
    DUAL &       $5.10 \pm 0.03$ & $20.75 \pm 0.17$ & $139.16 \pm 0.40$   & $16.02 \pm 0.12$ \\
    RETAIN &     $5.25 \pm 0.01$ & $29.70 \pm 0.12$ & $235.98 \pm 0.27$   & $44.50 \pm 0.32$ \\
    cLSTM &      $5.41 \pm 0.01$ & $35.88 \pm 0.11$ & $142.97 \pm 0.14$  & $65.71 \pm 0.09$ \\
    \hline
    MV-Fusion &   $5.17 \pm 0.07$ & $21.13 \pm 0.17$ & $133.64 \pm 1.30$ & $16.06 \pm 0.08$ \\
    MV-Indep &    $5.25 \pm 0.02$ & $19.92 \pm 0.11$ & $138.23 \pm 0.64$ & $15.55 \pm 0.11$ \\
    \hline
    MV-LSTM &     $\textbf{4.98} \pm 0.01$ & $\textbf{19.79} \pm 0.15$ & $\textbf{132.54} \pm 0.21$ & $\textbf{15.24} \pm 0.04$ \\
    \hline
\end{tabular}
\label{tab:mae}
\end{table}
\subsection{Model Interpretation}
In this part, we compare MV-LSTM to baselines also with interpretability over the variable importance, i.e. DUAL, RETAIN and cLSTM. For real datasets without ground truth about variable importance, we perform Granger causality test \cite{arnold2007temporal} to identify causal variables, which are considered as important variables for the further comparison. 
For the synthetic dataset, we evaluate by observing whether an approach can recognize variable $2$ and $3$ with high importance value.  

Similar to MV-LSTM, we can collect variable attentions of each sequence in DUAL and RETAIN and obtain importance value by Eq. \eqref{eq:import}. Note that variable attentions obtained in RETAIN are unnormalized values. In cLSTM, we identify important variables by non-zero corresponding weights of the neural network \cite{tank2018neural} and thus have no importance value to report in Table \ref{tab:rank}. 

Table \ref{tab:rank} reports some top variables ranked by the corresponding importance value in the brackets. The higher the importance value, the more crucial the variable. 
In dataset PM2.5, three variables (i.e. dew point, cumulated wind speed, and pressure) identified as Granger causal variables are also top ranked by the variable importance in MV-LSTM. 
As is pointed out by \cite{liang2015assessing}, dew point and pressure are
the most influential. Strong wind can bring dry and fresh air and it is crucial as well. This is in line with the variable importance detected by MV-LSTM. On the contrary, baselines miss identifying some variables.
Likewise, for Plant dataset, as is suggested by \cite{mekhilef2012effect, ghazi2014effect}
in addition to cloud cover, humidity, wind speed, and temperature affect the efficiency of PV cells and thus important for power generation.  
\begin{table}[!h]
  \centering
  \caption{Interpretation of variable importance.}
\resizebox{1.0\textwidth}{!}{
  \begin{tabular}{|m{1.2cm}|l|l|}
    \hline
    Dataset & Method & Rank of variables according to importance  \\
    \hline
    \multirow{3}{*}{PM2.5}& MV-LSTM & \cb{Dew point}$(0.38)$, \cb{Cumulated wind speed}$(0.35)$,                                               \cb{Pressure}$(0.073)$ \\
    & DUAL & Temperature (0.29), \cb{Dew Point}(0.26), \cb{Pressure}(0.21)\\
    & RETAIN & \cb{Pressure}(1.14), Cumulated hours of snow (0.04), \cb{Cumulated wind speed}(-0.42)\\
    & cLSTM &  \cb{Dew Point}, \cb{Pressure}, Temperature \\
    \hline
    \\[-1em]
    \multirow{3}{*}{Plant} & MV-LSTM & \cb{Cloud cover}$(0.26)$, \cb{Wind speed}$(0.12)$, \cb{Temperature}$(0.09)$, \cb{Humidity}(0.07) \\
    & DUAL & \cb{Humidity}$(0.29)$, \cb{Cloud cover}$(0.16)$, \cb{Wind speed}$(0.15)$, \cb{Temperature}(0.14) \\
    & RETAIN & Plant temp.$(0.69)$, \cb{Wind speed}$(0.38)$, Dew point$(0.35)$, \cb{Temperature} (0.25)\\
    & cLSTM &  \cb{Dew point}, \cb{Humidity}, Plant temperature, Wind bearing \\
    \hline
    \\[-1em]
    \multirow{3}{*}{Energy} & MV-LSTM & \cb{Living room temp.}$(0.36)$, Office room temp.$(0.17)$, \cb{Parents room temp.}$(0.17)$ \\
    & DUAL & Humidity outside (0.17), Wind speed (0.16), \cb{Living room temp.}(0.10) \\
    & RETAIN & Building outside temp. (0.13), \cb{Parents room temp.}(0.11), Outside temp. (0.11) \\
    & cLSTM &  Humidity outside, Office room temp., \cb{Living room temp.}\\
    \\[-1em]
    \hline
    \\[-1em]
    \multirow{3}{*}{Synthetic} & MV-LSTM & \cb{Variable 3}(0.18), \cb{ Variable 2}(0.18), Variable 8 (0.17), Variable 6 (0.15), Variable 4 (0.13)\\
    & DUAL & Variable 1 (0.12), Variable 0 (0.12), Variable 7 (0.11), \cb{Variable 3} (0.10), Variable 6 (0.10) \\
    & RETAIN &  Variable 10 (1.08), Variable 8(0.09), Variable 9(0.07), Variable 4 (0.06), Variable 6 (0.05) \\
    & cLSTM &  Variable 7, Variable 6, Variable 0, \cb{Variable 3}, Variable 1 \\
    \hline
\end{tabular}
}
\label{tab:rank}
\raggedright *Color box \cb{$\cdot$} represents the variable is important based on Granger causality test or ground truth.  
\end{table}
\begin{figure*}[htbp!]
    \centering
    \begin{subfigure}[t]{0.325\textwidth}
        \centering
        \includegraphics[width=1\textwidth]{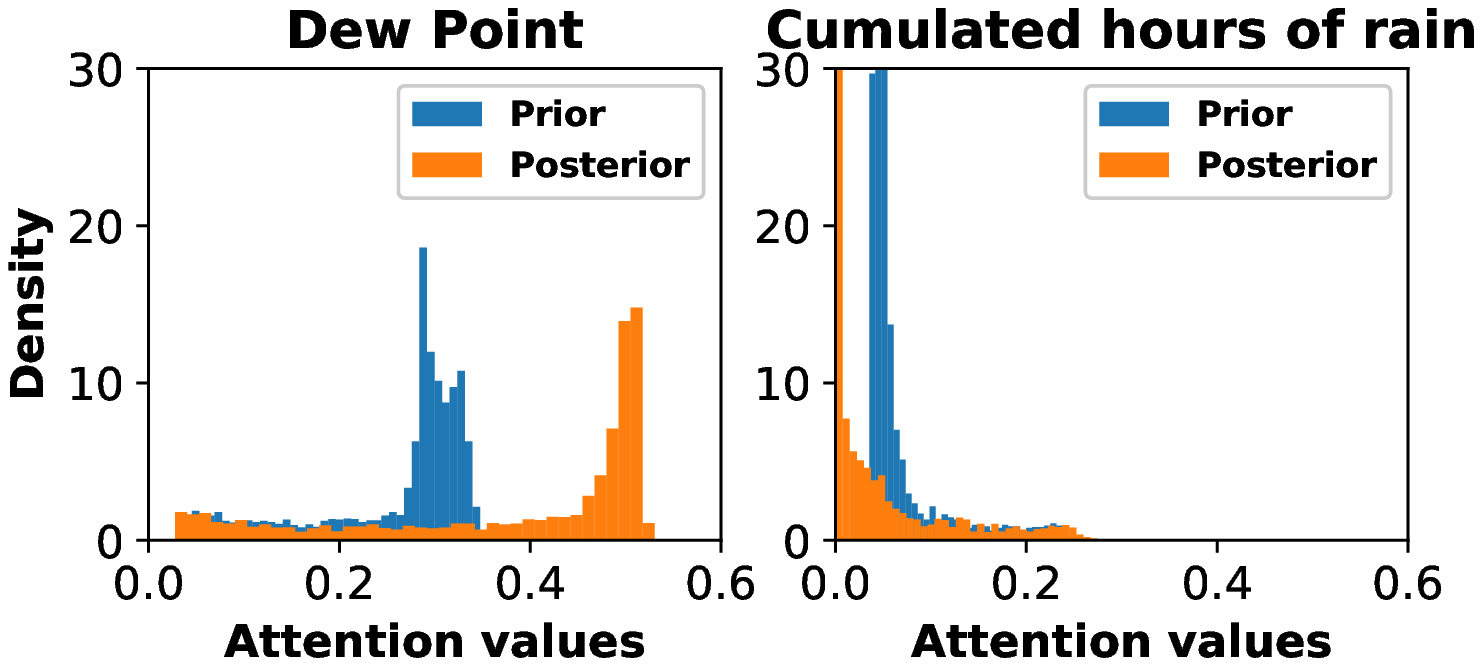}
        \caption{MV-LSTM}
    \end{subfigure}
    \begin{subfigure}[t]{0.325\textwidth}
        \centering
        \includegraphics[width=1\textwidth]{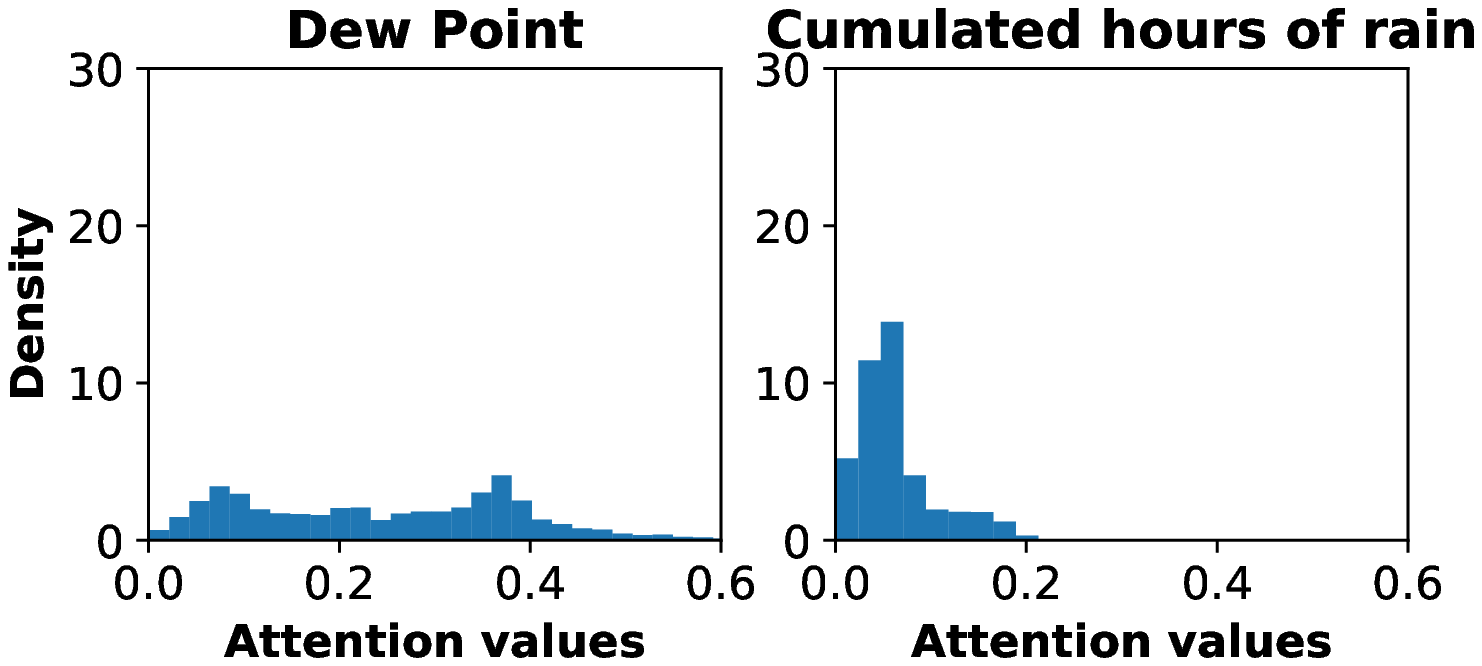}
        \caption{DUAL}
    \end{subfigure}
    \begin{subfigure}[t]{0.325\textwidth}
        \centering
        \includegraphics[width=1\textwidth]{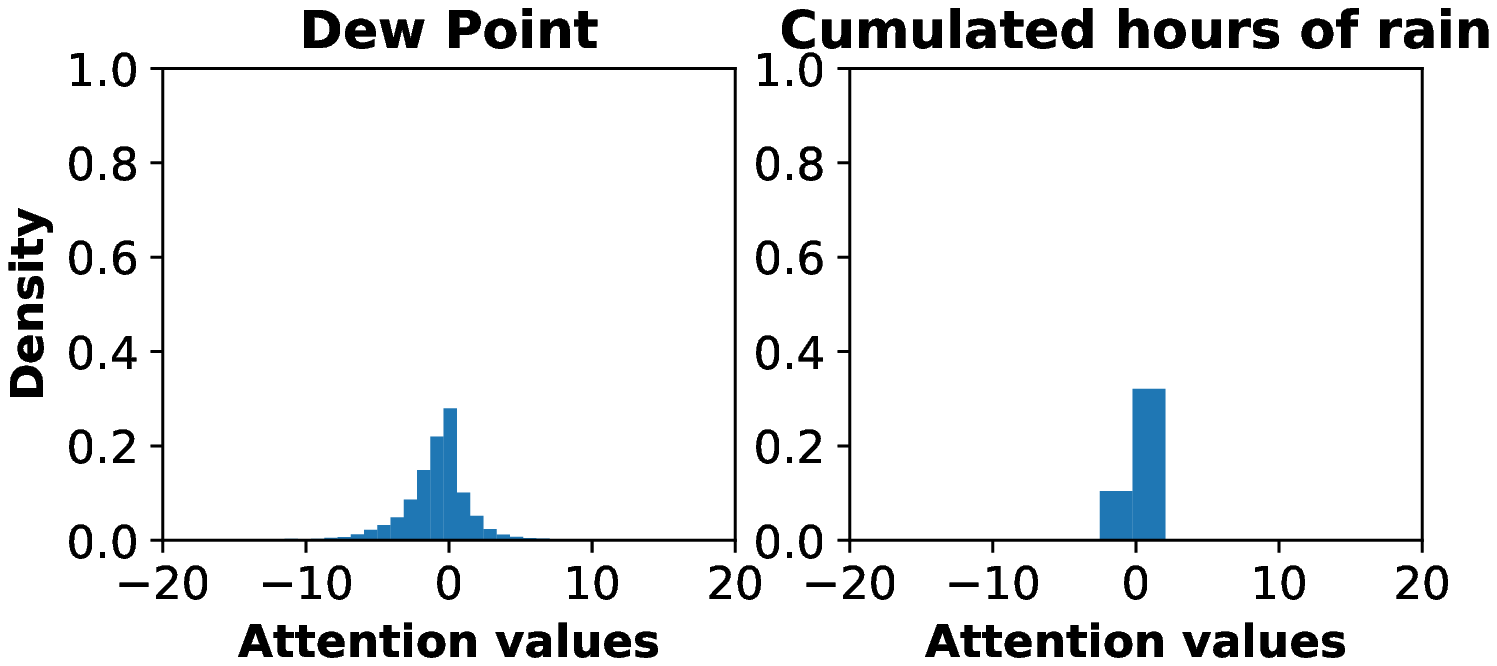}
        \caption{RETAIN}
    \end{subfigure}
    \caption{Histogram visualization of variable attentions w.r.t. two example variables in PM2.5. For MV-LSTM, both prior and posterior attentions are shown. DUAL and RETAIN only have attention weights.}
\label{fig:hist}
\end{figure*}

Furthermore, Figure~\ref{fig:hist} visualizes the histograms of attention values of two example variables in the PM2.5 dataset.
In MV-LSTM, compared with priors, the posterior attention of the variable ``dew point'' shifts rightward, while the posterior of variable ``cumulated hours of rain'' moves towards zero. 
It indicates that posterior attention rectifies the prior by taking into account the predictive likelihood. As a result, the variable importance derived from posterior attention is more distinguishable and informative, compared with the attention weights in DUAL and RETAIN.

\section{Conclusion}
In this paper, we propose an interpretable multi-variable LSTM for time series with exogenous variables. Based on the tensorized hidden states in MV-LSTM, we develop mixture temporal and variable attention mechanism, which enables to infer and quantify the variable importance w.r.t. the target series. 
Extensive experiments on a synthetic dataset with ground truth and real datasets with Granger causality test 
exhibit the superior prediction performance and interpretability of MV-LSTM. 

\bibliography{iclr2018_workshop}
\bibliographystyle{iclr2018_workshop}

\newpage
\section{Appendix}

\subsection{Multi-Variable LSTM}

\begin{theorem}
The hidden states and memory cells in MV-LSTM are updated by the process below:
\begin{align*}
&\mathbf{J}_t = \tanh \left( 
         \boldsymbol{\mathcal{W}}_h \circledast_N \mathbf{H}_{t-1}
         + \boldsymbol{\mathcal{W}}_x * \mathbf{x}_t
         + \mathbf{b}_j \right)    \\
&\begin{bmatrix}
        \mathbf{i}_t \\
        \mathbf{f}_t \\
        \mathbf{o}_t 
        \end{bmatrix} =  \sigma \left( \mathbf{W}  [\mathbf{x}_t \oplus \text{vec}(\mathbf{H}_{t-1})] + \mathbf{b} \right) \\
&\mathbf{c}_t = \mathbf{f}_t \odot \mathbf{c}_{t-1} + \mathbf{i}_t \odot \text{vec}(\mathbf{J}_t) \\
&\mathbf{H}_t = \text{matricization}( \mathbf{o}_t \odot \tanh(\mathbf{c}_t) ) 
\end{align*}
and therefore
each element $\mathbf{h}_t^n$ of hidden state matrix $\mathbf{H}_t = [ \mathbf{h}_t^{1} \, , \cdots, \, \mathbf{h}_t^{N} ]^\top$ encodes the information exclusively from the corresponding input variable $n$.
\end{theorem}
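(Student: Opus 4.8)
The plan is to verify the claim structurally by tracking, coordinate by coordinate, which input variables can influence each entry of $\mathbf{H}_t$, and then to close the argument by induction on $t$. First I would expand the two variable-specific operations in Eq.~\eqref{eq:j}. By definition $\boldsymbol{\mathcal{W}}_h \circledast_N \mathbf{H}_{t-1} = [\mathbf{W}_h^1 \mathbf{h}_{t-1}^1, \cdots, \mathbf{W}_h^N \mathbf{h}_{t-1}^N]^\top$ and $\boldsymbol{\mathcal{W}}_x * \mathbf{x}_t = [\mathbf{W}_x^1 x_t^1, \cdots, \mathbf{W}_x^N x_t^N]^\top$, so row $n$ of $\mathbf{J}_t$ reads $\mathbf{j}_t^n = \tanh(\mathbf{W}_h^n \mathbf{h}_{t-1}^n + \mathbf{W}_x^n x_t^n + \mathbf{b}_j^n)$. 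This establishes the crucial local fact: the candidate content for variable $n$ is a function of only the current observation $x_t^n$ of variable $n$ and of variable $n$'s own previous hidden state $\mathbf{h}_{t-1}^n$, with no additive contribution from any $x_t^m$ or $\mathbf{h}_{t-1}^m$, $m \neq n$.

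Next I would pin down the indexing so that the block structure survives the $\text{vec}(\cdot)$/matricization round trip. Since $\text{vec}$ stacks the columns of an $N\times d$ matrix, the linear position $(c-1)N + n$ corresponds uniformly to variable $n$ and coordinate $c$; I would note that matricization is defined as the inverse reshape, so the same position maps back to row $n$, column $c$. Because Eq.~\eqref{eq:c} is built entirely from Hadamard products and addition, it acts independently on each position: the entry of $\mathbf{c}_t$ at position $(c-1)N+n$ equals the forget gate times the same position of $\mathbf{c}_{t-1}$ plus the input gate times the $c$-th coordinate of $\mathbf{j}_t^n$. Unrolling this elementwise recurrence in time shows that the variable-$n$ positions of the cell accumulate only the variable-$n$ candidate contents $\{\mathbf{j}_\tau^n\}_{\tau \le t}$, and matricizing $\mathbf{o}_t \odot \tanh(\mathbf{c}_t)$ in Eq.~\eqref{eq:h} returns exactly those positions into row $n$, i.e.\ into $\mathbf{h}_t^n$. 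An induction on $t$ then closes the loop: assuming $\mathbf{h}_{t-1}^n$ already encodes only variable-$n$ information, the local fact above gives that $\mathbf{j}_t^n$, hence $\mathbf{h}_t^n$, does too, with the zero initialization serving as the base case.

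The main obstacle is conceptual rather than computational: the gate vectors $\mathbf{i}_t, \mathbf{f}_t, \mathbf{o}_t$ in Eq.~\eqref{eq:gate} are computed from $\mathbf{x}_t \oplus \text{vec}(\mathbf{H}_{t-1})$ and therefore depend on \emph{all} variables, so $\mathbf{h}_t^n$ is not literally a function of variable $n$ in isolation. I would resolve this by making precise what ``encodes information exclusively from variable $n$'' means: the gates enter only as \emph{scalar multipliers} acting positionwise, so they modulate \emph{how much} of the variable-$n$ signal is written to or retained in the variable-$n$ cell positions, but they never inject another variable's content \emph{additively} into those positions. Thus the signal accumulated at the variable-$n$ positions originates solely from $x_t^n$ and variable $n$'s own history, which is the sense in which $\mathbf{h}_t^n$ is variable-specific, while the cross-variable dependence of the gates is precisely the mechanism by which inter-variable correlation is still captured. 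I would state this separation of roles explicitly so that the theorem's conclusion and the earlier remark about correlative updating remain consistent.
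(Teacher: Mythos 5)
Your proposal is correct and follows essentially the same route as the paper's own proof: expand the tensor-dot and $*$ operations to show that row $n$ of $\mathbf{J}_t$ depends only on $x_t^n$ and $\mathbf{h}_{t-1}^n$, then observe that the gates, although computed from all variables, enter only as elementwise multipliers and therefore never inject another variable's content additively into variable $n$'s positions. Your version is in fact tighter than the paper's two-sentence argument --- the explicit induction on $t$, the index bookkeeping through $\text{vec}(\cdot)$ and matricization, and the precise statement of what ``encodes information exclusively from variable $n$'' means (multiplicative modulation versus additive content) are all left implicit there.
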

\begin{proof}
By the tensor-dot operation $\circledast_{ N }$, only the elements of $\boldsymbol{\mathcal{W}}_h$ and $\mathbf{H}_{t-1}$ corresponding to $n$ are matched to perform calculation, namely
$\boldsymbol{\mathcal{W}}_h \circledast_N \mathbf{H}_{t-1} = [ \mathbf{W}_h^1 \mathbf{h}_{t-1}^1 \, , \cdots, \, \mathbf{W}_h^N \mathbf{h}_{t-1}^N ]^\top $. 
Meanwhile, since the product between input-hidden transition weights and input vector is
$ \boldsymbol{\mathcal{W}}_x * \mathbf{x}_t = [ \mathbf{W}_x^1 x_{t}^1 \, , \cdots, \,  \mathbf{W}_x^N x_{t}^N]^\top$, each resulting element $\mathbf{W}_x^1 x_{t}^n$ only carries information about variable $n$. 
Then, though the derivation of gates $\mathbf{i}_t$, $\mathbf{f}_t$, and $\mathbf{o}_t$ mix information from all input variables in order to capture cross-correlation among variables, memory cells are updated by multiplication operation between gates and $\mathbf{J}_t$ and therefore the information encoded in $\mathbf{J}_t$ are still specific to each input variable. Likewise, hidden state matrix $\mathbf{H}_t$ derived from the updated memory retain the variable-wise hidden states. 
\end{proof}

\subsection{Prediction Performance}
In addition to the results under window size $30$ in Table \ref{tab:rmse} and \ref{tab:mae}, we report the prediction errors under different window sizes i.e. $T$ in Eq. \eqref{eq:loss}.

\begin{table}[!h]
  \centering
  \caption{Average test RMSE and std. errors under window size $10$}
  \begin{tabular}{|c|c|c|c|c|}
    \hline
     \backslashbox{Methods}{Dataset} & Synthetic & Energy & Plant & PM2.5  \\
    \hline
    STRX&        $4.32 \pm 0.02$ & $51.42 \pm 0.05$&  $214.44 \pm 0.32$ & $ 45.32 \pm 0.13$ \\
    ARIMAX&      $4.23 \pm 0.03$ & $47.04 \pm 0.04$&  $204.34 \pm 0.53$ & $ 45.55\pm 0.09$ \\
    \hline
    RF &         $2.22 \pm 0.06$  & $45.41 \pm 0.11$  &  $163.02 \pm 0.23$ & $29.64 \pm 0.13$ \\
    XGT &        $2.19 \pm 0.06$  & $40.53 \pm 0.09$  &  $163.24 \pm 0.43$  & $25.14 \pm 0.09$ \\
    ENET &       $2.19 \pm 0.03$  & $42.77 \pm 0.05$  &  $164.0  \pm 0.41$ & $25.92 \pm 0.11$ \\
    \hline
    DUAL &       $2.21 \pm 0.01$  & $39.97 \pm 0.04$   & $158.40 \pm 0.23$ & $ 25.60 \pm 0.09$ \\
    RETAIN &     $2.25 \pm 0.02$  & $52.68 \pm 0.05$   & $256.09 \pm 0.12$ & $ 52.43 \pm 0.10$ \\
    cLSTM &      $2.34 \pm 0.04$  & $46.32. \pm 009.$  & $168.32 \pm 0.16$ & $ 31.42 \pm 0.11$ \\
    \hline
    MV-Fusion &  $2.14 \pm 0.03$ & $40.13 \pm 0.11$    & $159.54 \pm 0.34$  & $26.10 \pm 0.09$ \\
    MV-Indep &   $2.21 \pm 0.05$ & $40.43 \pm 0.08$    & $161.34 \pm 0.45$  & $25.85 \pm 0.10$ \\
    \hline
    MV-LSTM &    $2.11 \pm 0.04$ & $39.12 \pm 0.02$    & $154.39 \pm 0.12$  & $24.73. \pm 0.06$ \\
    \hline
\end{tabular}
\label{tab:rmse-10}
\end{table}

\begin{table}[!h]
  \centering
  \caption{Average test MAE and std. errors under window size $10$}
  \begin{tabular}{|c|c|c|c|c|}
    \hline
     \backslashbox{Methods}{Dataset} & Synthetic & Energy & Plant & PM2.5  \\
    \hline
    STRX&        $3.54 \pm 0.04$ & $ 41.32 \pm 0.04$   & $ 189.12 \pm 0.56$     & $ 40.93 \pm 0.10$ \\
    ARIMAX&      $3.76 \pm 0.06$ & $ 39.03 \pm 0.03$   & $ 190.87 \pm 0.81$     & $ 39.84 \pm 0.11$ \\
    \hline
    RF &         $2.13\pm 0.07$  & $26.14 \pm 0.10$    & $133.69 \pm 0.43$   & $21.43 \pm 0.23$ \\
    XGT &        $2.03 \pm 0.08$ & $19.49 \pm 0.03$    & $131.48 \pm 0.80$   & $16.12 \pm 0.15$ \\
    ENET &       $1.88 \pm 0.00$ & $22.04 \pm 0.09$    & $137.16 \pm 0.60$   & $15.94 \pm 0.09$ \\
    \hline
    DUAL &       $1.83 \pm 0.05$  & $19.83 \pm 0.03$   & $ 130.56 \pm 0.30$   & $ 16.37 \pm 0.10$ \\
    RETAIN &     $1.92 \pm 0.04$  & $31.21 \pm 0.10$   & $ 201.34 \pm 0.41$   & $ 36.32 \pm 0.09$ \\
    cLSTM &      $2.01 \pm 0.01$  & $21.90 \pm 0.05$   & $ 134.60 \pm 0.31$   & $ 18.43\pm 0.08$ \\
    \hline
    MV-Fusion &   $2.10 \pm 0.03$ & $20.01 \pm 0.01$   & $131.10 \pm 0.31$   & $16.90 \pm 0.10$ \\
    MV-Indep &    $1.98 \pm 0.01$ & $21.03 \pm 0.03$   & $129.90 \pm 0.12$   & $16.10 \pm 0.08$ \\
    \hline
    MV-LSTM &     $1.83. \pm 0.02$ & $18.89 \pm 0.03$  & $128.21 \pm 0.13$   & $15.40. \pm 0.04$ \\
    \hline
\end{tabular}
\label{tab:mae-10}
\end{table}

\begin{table}[!h]
  \centering
  \caption{Average test RMSE and std. errors under window size $20$}
  \begin{tabular}{|c|c|c|c|c|}
    \hline
     \backslashbox{Methods}{Dataset} & Synthetic & Energy & Plant & PM2.5  \\
    \hline
    STRX&        $4.78 \pm 0.03$   & $55.43 \pm 0.09$  &  $ 231.43 \pm 0.19$ & $ 48.12 \pm 0.05$ \\
    ARIMAX&      $4.58 \pm 0.07$   & $53.32 \pm 0.08$  &  $ 225.54 \pm 0.23$ & $ 43.32 \pm 0.07$ \\
    \hline
    RF &         $4.12 \pm 0.05$   & $48.32 \pm 0.11$  &  $164.23 \pm 0.65$  & $ 30.43 \pm 0.04$ \\
    XGT &        $3.90 \pm 0.02$   & $42.43 \pm 0.09$  &  $164.10 \pm 0.54$  & $ 26.32 \pm 0.05$ \\
    ENET &       $3.91 \pm 0.01$   & $42.12 \pm 0.10$  &  $168.22 \pm 0.49$  & $ 27.12 \pm 0.2$ \\
    \hline
    DUAL &       $3.36 \pm 0.02$   & $39.35 \pm 0.09$  & $175.43 \pm 0.54$   & $ 24.89 \pm 0.05$ \\
    RETAIN &     $3.42 \pm 0.05$   & $53.80 \pm 0.10$  & $280.81 \pm 0.36$   & $ 58.44 \pm 0.03$ \\
    cLSTM &      $3.98 \pm 0.07$   & $40.02 \pm 0.11$  & $174.30 \pm 0.34$   & $ 30.66 \pm 0.05$ \\
    \hline
    MV-Fusion &  $3.43 \pm 0.05$   & $41.54 \pm 0.05$  & $163.03 \pm 0.20$   & $26.10 \pm 0.04$ \\
    MV-Indep &   $3.41 \pm 0.02$   & $40.34 \pm 0.08$  & $162.10 \pm 0.32$   & $26.13 \pm 0.06$ \\
    \hline
    MV-LSTM &    $3.27 \pm 0.03$   & $39.13 \pm 0.0$   & $159.97 \pm 0.12$   & $24.70 \pm 0.08$ \\
    \hline
\end{tabular}
\label{tab:rmse-20}
\end{table}

\begin{table}[!h]
  \centering
  \caption{Average test MAE and std. errors under window size $20$}
  \begin{tabular}{|c|c|c|c|c|}
    \hline
     \backslashbox{Methods}{Dataset} & Synthetic & Energy & Plant & PM2.5  \\
    \hline
    STRX&        $4.10 \pm 0.03$  & $47.43 \pm 0.08$  &  $192.23 \pm 0.43$ & $42.13 \pm 0.03$ \\
    ARIMAX&      $4.02 \pm 0.04$  & $46.30 \pm 0.09$  &  $193.42 \pm 0.41$ & $42.98 \pm 0.07$ \\
    \hline
    RF &         $2.90 \pm 0.03$  & $26.34 \pm 005$   &  $130.90 \pm 0.15$ & $21.05 \pm 0.11$ \\
    XGT &        $2.87 \pm 0.04$  & $19.26 \pm 0.08$  &  $131.47 \pm 0.21$ & $15.71 \pm 0.09$ \\
    ENET &       $2.88 \pm 0.08$  & $27.71 \pm 0.11$  &  $137.04 \pm 0.38$ & $15.92 \pm 0.10$ \\
    \hline
    DUAL &       $2.87 \pm 0.08$  & $19.95 \pm 0.10$  &  $136.87\pm 0.12$   & $15.27 \pm 0.09$ \\
    RETAIN &     $2.90 \pm 0.05$  & $30.12 \pm 0.09$  &  $228.45 \pm 0.15$  & $41.21 \pm 0.06$ \\
    cLSTM &      $2.98 \pm 0.07$  & $23.43 \pm 0.11$  &  $141.14 \pm 0.31$  & $16.87 \pm 0.10$ \\
    \hline
    MV-Fusion &  $3.01 \pm 0.03$  & $20.80 \pm 0.10$  & $132.12 \pm 0.011$  & $16.12 \pm 0.04$ \\
    MV-Indep &   $2.98 \pm 0.03$  & $19.91 \pm 0.09$  & $131.22 \pm 0.032$  & $15.90 \pm 0.07$ \\
    \hline
    MV-LSTM &    $2.84 \pm 0.02$  & $18.91 \pm 0.0$   & $129.90 \pm 0.14$   & $14.89 \pm 0.08$ \\
    \hline
\end{tabular}
\label{tab:mae-20}
\end{table}

\subsection{Model Interpretation}
In this part, we provide the variable list about each dataset in Table \ref{tab:fullvar} and report the full variable importance in Table \ref{tab:fullrank}. 
Figure \ref{fig:full_pm} to \ref{fig:full_syn} visualize the histograms of attention of all variables in each dataset.
In MV-LSTM, compared with priors, the posterior attention rectifies the prior by taking into account the predictive likelihood. 
while the attention weights in DUAL and RETAIN are not representative enough.

\begin{table}[!h]
  \centering
  \caption{Datasets.}
  \begin{tabular}{|c|c|}
    \hline
    Dataset & Variables   \\
    \hline
    PM2.5 &  \makecell{Dew Point, Temperature, Pressure, Cumulated wind speed, \\
                       Cumulated hours of snow, Cumulated hours of rain, PM2.5 measurement} \\
    \hline
    \\[-1em]
    Plant & \makecell{Plant temperature, Cloud cover, Dew point, Humidity, \\ 
                      Temperature, Wind bearing, Wind speed, Power production }\\
    \hline
    \\[-1em]
    Energy & \makecell{Kitchen temperature, Living room temperature, Laundry room temperature,\\
    Office room temperature, Bathroom temperature, Building outside temperature, \\
    Ironing room temperature, Teenager room temperature, Parents room temperature, \\
    Outside temperature, Wind speed, Humidity outside, Dew point, Energy consumption} \\
    \\[-1em]
    \hline
    \\[-1em]
    Synthetic &  Variable 0 to 9, target variable 10\\
    \hline
\end{tabular}
\label{tab:fullvar}
\end{table}

\begin{table}[tbhp!]
  \centering
  \caption{Interpretation of variable importance (full results).}
\resizebox{1.0\textwidth}{!}{
  \begin{tabular}{|m{1.2cm}|l|p{10cm}|}
    \hline
    Dataset & Method & Rank of variables according to importance  \\
    \hline
    \multirow{3}{*}{PM2.5}& MV-LSTM & \cb{Dew point}$(0.38)$, \cb{Cumulated wind speed}$(0.35)$, \cb{Pressure}$(0.073)$, 
    Temperature(0.067), Autoregressive(0.05), Cumulated hours of snow(0.04), Cumulated hours of rain(0.04) \\
    & DUAL & Temperature(0.29), \cb{Dew Point}(0.26), \cb{Pressure}(0.21), 
             \cb{Cumulated wind speed}(0.09), Cumulated hours of snow (0.08), Cumulated hours of rain (0.07) \\
    & RETAIN & {Autoregressive}(3.79), \cb{Pressure}(1.14), Cumulated hours of snow(0.04), \newline \cb{Cumulated wind speed}(-0.42), Cumulated hours of rain (-0.47), \cb{Dew Point} (-1.24), Temperature (-1.80) \\
    & cLSTM &  \cb{Dew Point}, \cb{Pressure}, Temperature \\
    \hline
    \\[-1em]
    \multirow{3}{*}{Plant} & MV-LSTM & \cb{Cloud cover}$(0.26)$, \cb{Wind speed}$(0.12)$, \cb{Temperature}$(0.09)$, \cb{Humidity} (0.07), Autoregressive (0.30), Dew point (0.06), Wind bearing (0.05), Plant-temp.(0.05) \\
    & DUAL & \cb{Humidity}$(0.29)$, \cb{Cloud cover}$(0.16)$, \cb{Wind speed}$(0.15)$, \cb{Temperature} (0.14), Wind bearing (0.09), Plant-temp (0.09), Dew point (0.08) \\
    & RETAIN & Plant temp.$(0.69)$, \cb{Wind speed}$(0.38)$, Dew point$(0.35)$, Autoregressive (0.77), \cb{Temperature} (0.25), Wind bearing (-0.13), \cb{Cloud cover} (-0.46), \cb{Humidity} (-0.84) \\
    & cLSTM & Dew point, \cb{Humidity}, Plant temperature, Autoregressive, Wind bearing, \cb{Wind speed} \\
    \hline
    \\[-1em]
    \multirow{3}{*}{Energy} & MV-LSTM & \cb{Living room}$(0.36)$, Office room$(0.17)$, \cb{Parents room}$(0.17)$, Humidity outside (0.15), Dew point(0.06), Wind speed(0.05), Kitchen temp. (0.02), Bathroom temp (0.02), Teenager room temp. (0.02), Building outside temp.(0.01), Outside temp.(0.01), Autoregressive (0.01), Ironing room temp.(0.008), Laundry room temp.(0.002) \\
    & DUAL & Humidity outside(0.17), Wind speed(0.16), \cb{Living room temp.}(0.10), \cb{Parents room temp.} (0.07), Laundry room temp. (0.06), Bathroom temp. (0.06), Building outside temp. (0.06), Teenager room temp.(0.06), Outside temp.(0.06), Ironing room temp. (0.05), Kitchen temp.(0.05), Dew point (0.05), 'Office room temp.', 0.05)  \\
    & RETAIN & Building outside temp.(0.13), Autoregressive (0.12),  \cb{Parents room temp.}(0.11), Outside temp.(0.11), Ironing room temp.(0.09), Bathroom temp. (0.09), Laundry room temp. (0.09), \cb{Living room temp.}(0.08), Office room temp.(0.06), Dew point(0.06), Teenager room temp. (0.05), Kitchen temp.(0.047), Wind speed (0.03), Humidity outside (-0.07)\\
    & cLSTM & Humidity outside, Office room temp., \cb{Living room temp.}, Laundry room temp., \cb{Parents room temp.}, Dew point, Ironing room temp \\
    \\[-1em]
    \hline
    \\[-1em]
    \multirow{3}{*}{Synthetic} & MV-LSTM & \cb{Variable 3}(0.18), \cb{ Variable 2}(0.18), Variable  8(0.17), Variable 6(0.15), Variable 4 (0.13), Variable 1 (0.08), Variable 7 (0.06), Variable 0 (0.02), Variable 9 (0.01), Autoregressive (0.01), Variable 5 (0.01) \\
    & DUAL & Variable 1 (0.12), Variable 0 (0.12), Variable 7 (0.11), \cb{Variable 3} (0.10), Variable 6 (0.10), \cb{Variable 2} (0.09), Variable 4 (0.09), Variable 8 (0.09), Variable 5 (0.09), Variable 9 (0.09)\\
    & RETAIN &  Variable 10 (1.08), Variable 8(0.09), Variable 9(0.07), Variable 4 (0.06), Variable 6 (0.05), \cb{Variable 2} (0.02), Variable 1 (0.01), Variable 7 (-0.04), Variable 0 (-0.1), Variable 5 (-0.11), \cb{Variable 3} (-0.13)  \\
    & cLSTM &  Variable 7, Variable 6, Variable 0, \cb{Variable 3}, Variable 1 \\
    \hline
\end{tabular}
}
\label{tab:fullrank}
\raggedright *Color box \cb{$\cdot$} represents the variable is important based on Granger causality test or ground truth.
\end{table}

\begin{figure}[htbp!]
\centering
\includegraphics[width=1.0\textwidth]{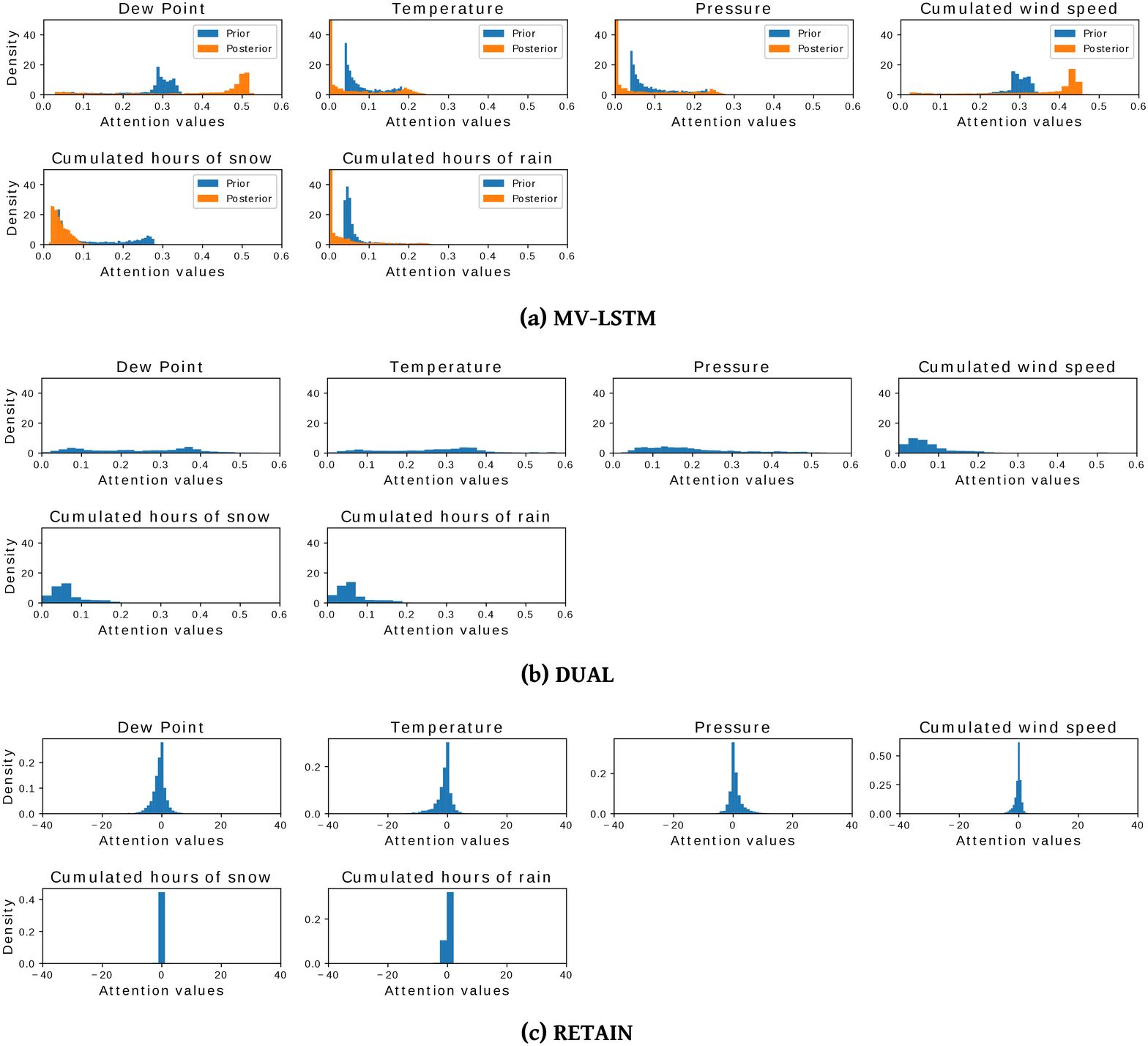}
\caption{Histogram visualization of variable attentions in the PM2.5 dataset. For MV-LSTM, both prior and posterior attentions are shown. DUAL and RETAIN only have attention weights.}
\label{fig:full_pm}
\end{figure}

\begin{figure}[htbp!]
\centering
\includegraphics[width=1.0\textwidth]{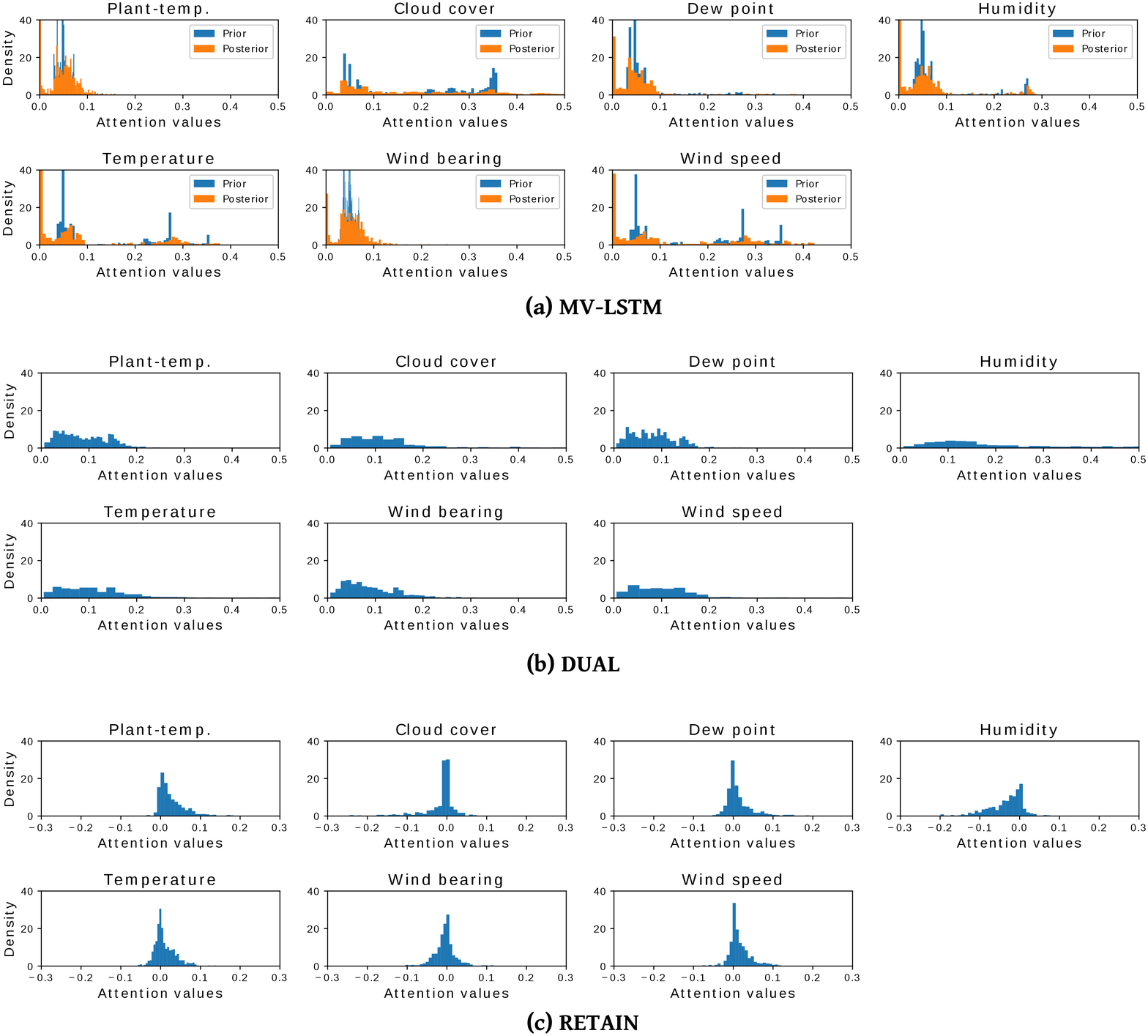}
\caption{Histogram visualization of variable attentions in the Plant dataset. For MV-LSTM, both prior and posterior attentions are shown. DUAL and RETAIN only have attention weights.}
\label{fig:full_plant}
\end{figure}

\begin{figure}[htbp!]
\centering
\includegraphics[width=1.0\textwidth]{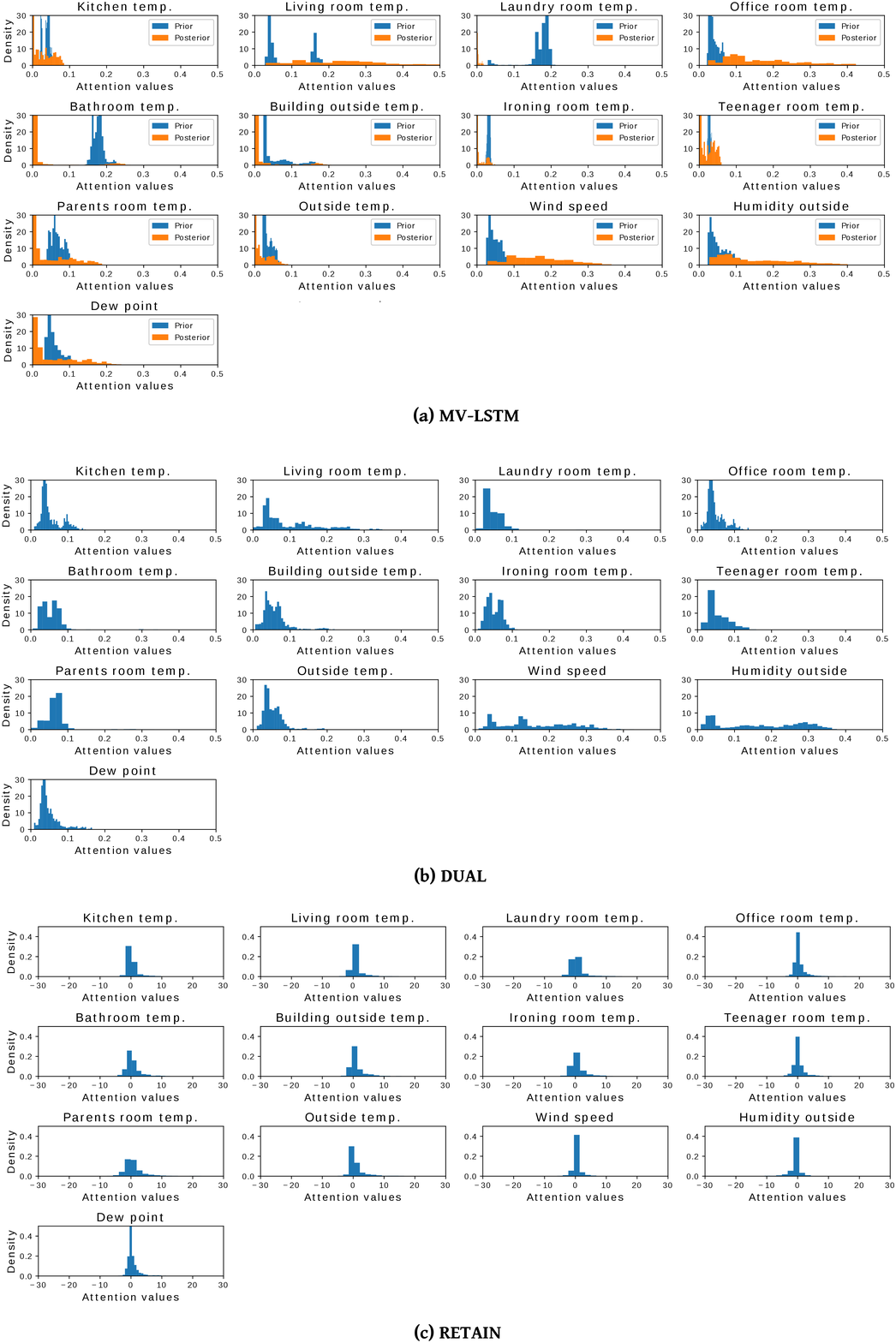}
\caption{Histogram visualization of variable attentions in the Energy dataset. For MV-LSTM, both prior and posterior attentions are shown. DUAL and RETAIN only have attention weights.}
\label{fig:full_energy}
\end{figure}

\begin{figure}[htbp!]
\centering
\includegraphics[width=1.0\textwidth]{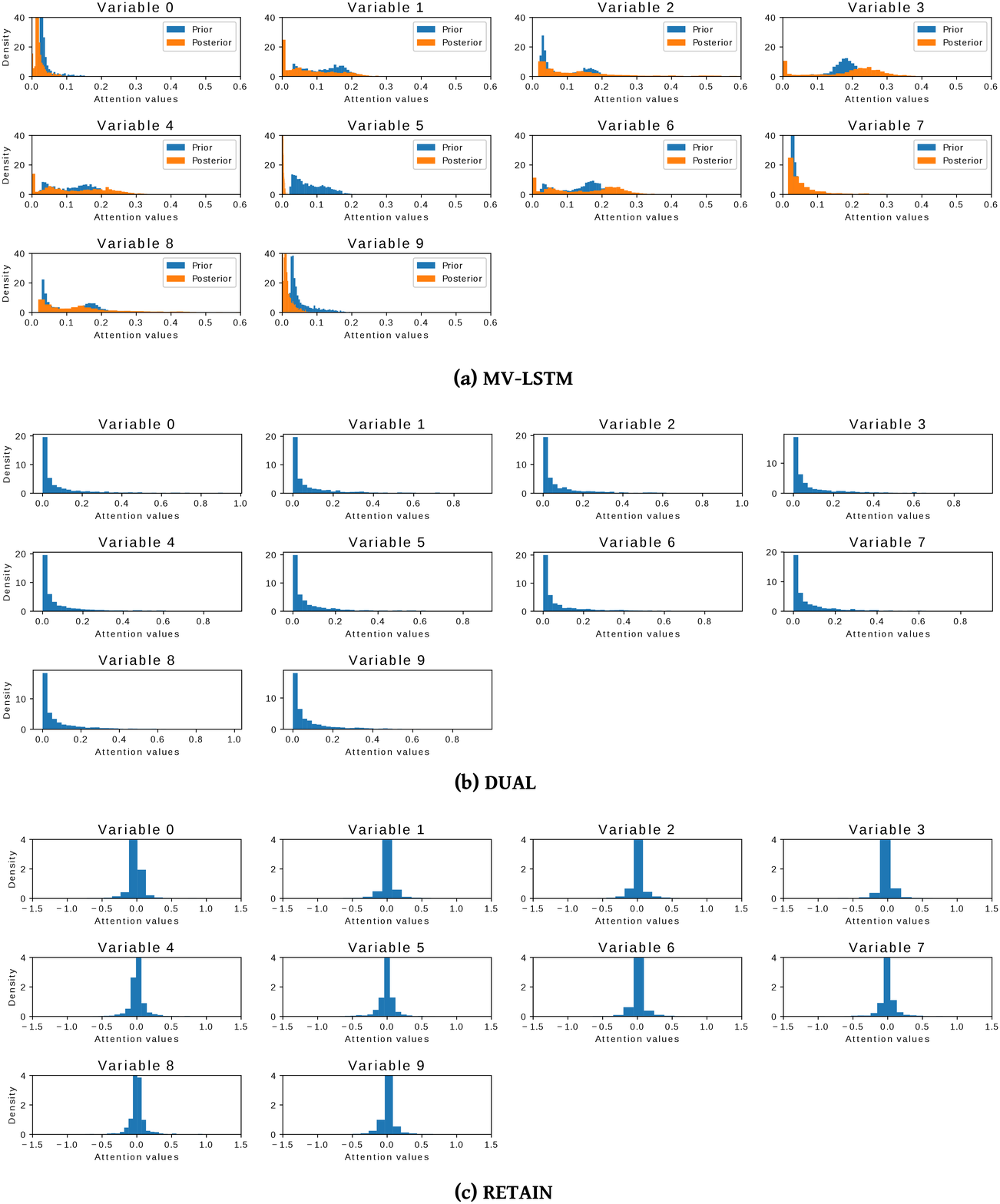}
\caption{Histogram visualization of variable attentions in the Synthetic dataset. For MV-LSTM, both prior and posterior attentions are shown. DUAL and RETAIN only have attention weights.}
\label{fig:full_syn}
\end{figure}

\end{document}